\documentclass[11pt,a4paper]{article}

\usepackage[utf8]{inputenc}
\usepackage[T1]{fontenc}
\usepackage{lmodern}
\usepackage{microtype}
\usepackage[margin=1.0in]{geometry}
\usepackage{amsmath,amssymb,amsthm}
\usepackage{mathtools}
\usepackage{booktabs}
\usepackage{tabularx}
\usepackage{multirow}
\usepackage{graphicx}
\usepackage{xcolor}
\usepackage{hyperref}
\usepackage{natbib}
\usepackage{enumitem}
\usepackage{float}
\usepackage{caption}
\usepackage{subcaption}
\usepackage{algorithm}
\usepackage{algpseudocode}
\usepackage{thmtools}
\usepackage{setspace}
\usepackage{tikz}
\usepackage{pgfplots}
\pgfplotsset{compat=1.18}
\usetikzlibrary{arrows.meta,shapes,positioning,calc,decorations.pathreplacing,
                backgrounds,fit,shadows,patterns,matrix}

\definecolor{hitlBlue}{RGB}{31,119,180}
\definecolor{hitlOrange}{RGB}{255,127,14}
\definecolor{hitlGreen}{RGB}{44,160,44}
\definecolor{hitlRed}{RGB}{214,39,40}
\definecolor{hitlPurple}{RGB}{148,103,189}
\definecolor{hitlGray}{RGB}{127,127,127}
\definecolor{boxBlue}{RGB}{219,234,254}
\definecolor{boxGreen}{RGB}{220,252,231}
\definecolor{boxOrange}{RGB}{255,237,213}
\definecolor{boxRed}{RGB}{254,226,226}
\definecolor{darkBlue}{RGB}{30,64,175}
\definecolor{darkGreen}{RGB}{20,83,45}
\definecolor{midGray}{RGB}{209,213,219}

\hypersetup{
  colorlinks=true,
  linkcolor=darkBlue,
  citecolor=darkGreen,
  urlcolor=darkBlue
}

\newtheorem{theorem}{Theorem}[section]

\newtheorem{remark}[theorem]{Remark}

\newcommand{\E}{\mathbb{E}}
\newcommand{\xb}{\mathbf{x}}
\newcommand{\A}{\mathcal{A}}
\newcommand{\X}{\mathcal{X}}
\newcommand{\T}{\mathcal{T}}
\newcommand{\D}{\mathcal{D}}
\newcommand{\arec}{a^{\text{rec}}}
\newcommand{\aexec}{a^{\text{exec}}}
\newcommand{\ahuman}{a^{\text{human}}}

\newcommand{\piold}{\pi_0}
\newcommand{\thetavec}{\boldsymbol{\theta}}
\newcommand{\betavec}{\boldsymbol{\beta}}

\begin{document}

\title{\textbf{Human-in-the-Loop Contextual Bandits for\\
       Short-Term Rental Dynamic Pricing:\\
       Structural Equivalence of Historical Warm-Up\\
       and Approval-Gated Live Learning}}

\author{
  Oleg Miroshnichenko
}

\date{May 2026}

\maketitle

\begin{abstract}
Dynamic pricing in short-term rental (STR) markets presents a distinctive challenge for
online learning algorithms: pricing decisions carry significant financial risk, operators
require explainability, and market feedback is sparse (one booking outcome per listed
night). We introduce the \textbf{Human-in-the-Loop Gated Bandit (HITL-GB)} framework,
in which a contextual bandit algorithm generates price recommendations but a human agent
retains authority to accept, modify, or reject each recommendation before it is applied.
We show that under this approval constraint, historical pricing data --- collected under a
prior deterministic policy --- is \emph{structurally equivalent} to on-policy warm-up data
for initialising the bandit's posterior, bypassing the weeks-to-months cold-start period
that renders pure online bandit learning impractical in sparse-feedback markets.
We formalise the approval-gated reward signal, derive a regularised ridge-regression
warm-up procedure from historical episodes, and validate the approach on \textbf{real
STR production data} (anonymised urban market, 2 rooms, April 2022 -- April 2026,
1\,461 nightly pricing episodes).
Our warm-up procedure compresses effective cold-start from $\sim$150 episodes to
$\sim$30 episodes when initialising agents from the Hierarchical Factored Thompson Sampling
(HF-TS) family \citep{HongNeurIPS2021,HongICML2022,ZimmertSeldin2018}.
We further argue that the structural equivalence result is domain-agnostic: any high-stakes
domain where human approval is legally or operationally required --- including clinical drug
dosing, credit origination, content moderation, and radiological diagnosis --- satisfies the
same conditions and benefits from the same warm-up strategy. In regulated industries,
mandatory human oversight is thus a \emph{statistical asset} rather than a deployment
constraint.
\end{abstract}

\paragraph{Keywords.}
contextual bandits, dynamic pricing, human-in-the-loop, off-policy evaluation,
short-term rental, cold-start, hierarchical Thompson sampling, factored bandits,
clinical decision support, regulated AI

\tableofcontents
\newpage

\section{Introduction}
\label{sec:intro}

Online learning algorithms --- and multi-armed bandits in particular --- have demonstrated
strong performance in dynamic pricing for e-commerce \citep{Misra2019}, ride-sharing
\citep{Tang2013}, and hotel revenue management \citep{Ferreira2016}. The core appeal is
clear: the algorithm explores the price-demand curve, updates its beliefs from booking
outcomes, and converges toward revenue-maximising arms without requiring a pre-specified
demand model.

In short-term rental markets, however, na\"ive application of bandit algorithms faces a
structural barrier: \textbf{human operators must approve pricing decisions}. Property
managers, revenue managers, and portfolio owners are reluctant to delegate pricing
authority fully to an algorithm. Prices affect guest relationships, brand perception, and
platform ranking --- consequences that extend beyond a single booking outcome. This is not
a limitation to be engineered away; it is a fundamental feature of the domain.

The dominant practical response is to treat the bandit as a \textbf{recommendation
system}: the algorithm proposes an arm (price multiplier), and the human accepts or
overrides. This is widely deployed in practice but poorly studied theoretically. In
particular, three questions remain open:

\begin{enumerate}[leftmargin=*]
  \item \textbf{Feedback attribution}: When the human overrides the recommendation, whose
        decision generated the reward --- the human's or the algorithm's?
  \item \textbf{Historical equivalence}: Can historical pricing data (collected under a
        prior deterministic policy) serve as valid warm-up data, or does the approval gate
        invalidate off-policy reuse?
  \item \textbf{Cold-start compression}: Does HITL approval, combined with historical
        warm-up, eliminate the impractically long cold-start period of pure online bandit
        learning in sparse markets?
\end{enumerate}

This paper addresses all three questions. Our main contributions are:

\begin{enumerate}[leftmargin=*]
  \item \textbf{HITL-GB framework} (\S\ref{sec:formulation}): a formal definition of the
        Gated Bandit system, where the arm applied to the environment may differ from the
        arm recommended by the algorithm, mediated by a human approval function
        $h : \A \times \X \to \A$.

  \item \textbf{Structural equivalence theorem} (\S\ref{sec:warmup}): under the approval
        constraint with stationary approval function, historical data generated by a
        deterministic pricing policy is a valid warm-up initialiser without
        importance-sampling corrections.

  \item \textbf{Dual cold-start from one dataset} (\S\ref{sec:warmup}): the same historical
        episodes simultaneously initialise the bandit arm posteriors \emph{and} calibrate
        the four day-signal parameters $\thetavec$, compressing cold-start from
        $\sim$150 to $\sim$30 booked episodes.

  \item \textbf{Empirical validation} (\S\ref{sec:results}) on real STR production data
        (1\,461 nightly episodes), showing positive revenue advantage within
        $\approx 30$ live episodes.

  \item \textbf{Cross-domain applicability} (\S\ref{sec:applications}): a survey of
        12 regulated domains where the result holds, including clinical dosing, credit
        origination, and content moderation.
\end{enumerate}

\paragraph{The key counterintuitive insight.}
The human approval gate is typically treated as friction between the algorithm and the
market. This paper reframes it: the approval gate is precisely \emph{what makes
historical data valid for warm-up without IS correction}. Regulatory requirements are not
obstacles to ML deployment --- they are the mechanism that makes fast deployment possible.

\section{Related Work}
\label{sec:related}

\subsection{Hierarchical and Factored Bandits}

Our base agent family, HF-TS, draws on a body of recent hierarchical bandit literature.

\paragraph{Factored bandits.}
\citet{ZimmertSeldin2018} decompose pricing actions into a Cartesian product of
independent atomic actions combined multiplicatively, yielding regret bounds sub-linear in
the joint arm count. We use this as Layer 1 (market demand) of the hierarchy.

\paragraph{Hierarchical Thompson Sampling.}
\citet{HongNeurIPS2021} model all properties as tasks drawn from a shared cluster
distribution, enabling cross-property posterior sharing. \citet{HongICML2022} extend this
to an arbitrary $L$-level prior tree with regret bounds improving polynomially with depth.

\paragraph{Coarse-to-Fine hierarchical exploration.}
\citet{YueICML2012} progressively unlock finer arm-space states as data density grows,
providing the optimal unlock threshold (Theorem~\ref{thm:unlock} below).

\paragraph{Metadata and online-cluster variants.}
\citet{WanNeurIPS2021} replace hard cluster assignment with cosine-similarity-weighted
Bayesian priors. \citet{ZhouL4DC2024} allow cluster assignments to evolve online via
$k$-means-style centroid tracking.

\subsection{Human-in-the-Loop Machine Learning}

The HITL literature predominantly addresses \emph{active learning} \citep{Settles2012}
and reinforcement learning from human feedback (RLHF)
\citep{Christiano2017,Ouyang2022}. In RLHF, human preferences shape a reward model that
guides policy learning. Our setting differs fundamentally: the human approves or overrides
\emph{actions before execution}, making approval a pre-execution gate rather than a
post-hoc label.

The closest related work is HITL bandits for clinical trial design \citep{Liao2020} and
educational recommendation \citep{Rafferty2019}, where expert approval constrains arm
selection. Neither addresses structural equivalence of historical warm-up under the
approval constraint, nor the sparse-feedback regime.

\subsection{Off-Policy Evaluation and Warm-Up}

Off-policy evaluation (OPE) addresses learning from data collected under a different
policy \citep{Precup2000}. The standard solution is importance sampling (IS) with
propensity correction:
\begin{equation}
  \hat{V}(\pi) = \frac{1}{N} \sum_{t=1}^{N}
    \frac{\pi(a_t \mid \xb_t)}{\piold(a_t \mid \xb_t)} r_t.
\end{equation}
We show that under the HITL approval structure, IS corrections are unnecessary for
warm-up initialisation --- simplifying implementation and avoiding the high variance of IS
estimators in sparse datasets.

\subsection{Short-Term Rental Pricing}

STR pricing research has focused on hedonic regression \citep{Gibbs2018}, demand
forecasting, and competitive positioning. Bandit-based STR pricing remains largely
unstudied academically. Our work contributes the first formal treatment of HITL-gated
bandit pricing in this domain.

\section{Problem Formulation}
\label{sec:formulation}

\subsection{The HITL-GB Setting}

Let $\T = \{1, 2, \ldots, T\}$ be the set of pricing time-steps, where each step $t$
corresponds to a single night in a property calendar. At each step $t$:

\begin{itemize}[leftmargin=*]
  \item The environment reveals context $\xb_t \in \X$ (market occupancy, days until
        check-in, day of week, property fill rate, etc.)
  \item The bandit algorithm selects arm $\arec_t \in \A$ (a price multiplier)
  \item The human agent applies approval function $h : \A \times \X \to \A$, yielding
        executed arm $\aexec_t = h(\arec_t, \xb_t)$
  \item The environment returns reward $r_t \sim R(\cdot \mid \aexec_t, \xb_t)$
        (booking outcome $\times$ price)
\end{itemize}

The bandit observes the tuple $(\arec_t, \aexec_t, r_t, \xb_t)$ and updates its
posterior. The complete decision cycle is illustrated in Figure~\ref{fig:schematic}.

\begin{figure}[H]
\centering
\includegraphics[width=\linewidth]{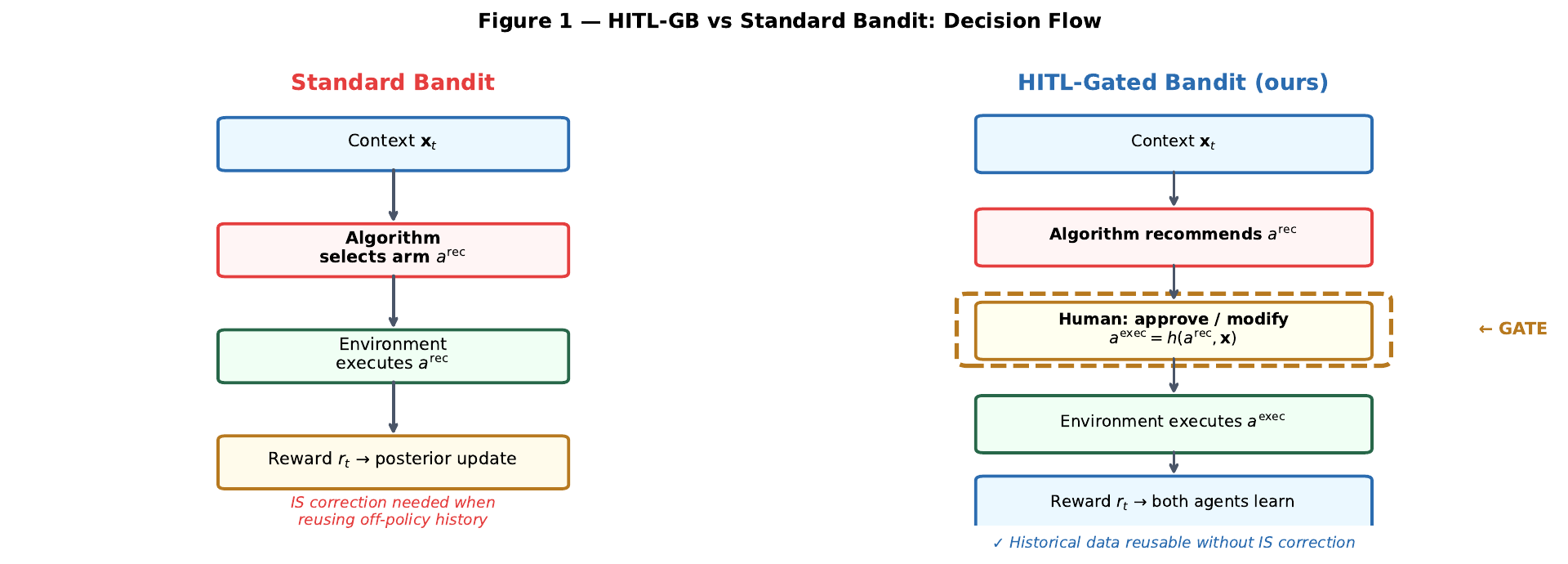}
\caption{%
  \textbf{The HITL-GB decision cycle.}
  \emph{Top}: standard bandit --- the algorithm selects and executes an arm directly;
  IS correction is required when reusing off-policy history.
  \emph{Bottom}: HITL-Gated Bandit (ours) --- the algorithm \emph{recommends} an arm;
  the human approves or modifies it; both agents observe the reward.
  The critical property: because the same gate $h$ was active in the historical regime,
  executed-arm distributions match, and historical data requires no IS correction.
}
\label{fig:schematic}
\end{figure}

\subsection{The Human Approval Function}

We model the human approval function as:
\begin{equation}
  h(\arec, \xb) =
  \begin{cases}
    \arec             & \text{with probability } p(\xb) \\
    \ahuman(\xb)      & \text{with probability } 1 - p(\xb)
  \end{cases}
  \label{eq:approval}
\end{equation}
where $p(\xb) \in [0,1]$ is the context-dependent acceptance probability and
$\ahuman(\xb)$ is the human's preferred arm given context $\xb$.

\textbf{Key special cases:}
$p(\xb) = 1$ gives a standard bandit with full delegation;
$p(\xb) = 0$ gives full human control with no bandit input;
$p(\xb) \in (0,1)$ is the HITL-GB regime studied here.

\subsection{The Three-Layer Price Signal}

The HITL-GB system produces a final price as a product of three components, each
operating at a different temporal and granularity scale:

\begin{equation}
  \text{price}_t = \bar{r}_t \;\times\; \mu^{\text{LLM}} \;\times\; \delta_t(\thetavec)
  \label{eq:price_stack}
\end{equation}

\begin{figure}[H]
\centering
\resizebox{\linewidth}{!}{%
\begin{tikzpicture}[
  layer/.style = {rounded corners=5pt, minimum width=3.4cm, minimum height=1.2cm,
                  draw=none, align=center, font=\small},
  times/.style = {font=\Large, hitlGray}
]
\node[layer, fill=boxBlue!70]
  (cluster) at (0,0)
  {\textbf{Market Rate}\\$\bar{r}_t$\\{\footnotesize\color{hitlGray}IQR-robust cluster mean}};

\node[times] (t1) at (3.6,0) {$\times$};

\node[layer, fill=boxOrange!70]
  (llm) at (7.2,0)
  {\textbf{LLM Multiplier}\\$\mu^{\text{LLM}}$\\{\footnotesize\color{hitlGray}property-level, monthly}};

\node[times] (t2) at (10.8,0) {$\times$};

\node[layer, fill=boxGreen!70]
  (day) at (14.4,0)
  {\textbf{Day Signal}\\$\delta_t(\thetavec)$\\{\footnotesize\color{hitlGray}per-night, calibrated}};

\draw[-{Stealth}, thick, hitlBlue] (cluster.east) -- (t1);
\draw[-{Stealth}, thick, hitlOrange] (t1) -- (llm.west);
\draw[-{Stealth}, thick, hitlOrange] (llm.east) -- (t2);
\draw[-{Stealth}, thick, hitlGreen] (t2) -- (day.west);

\draw[decorate, decoration={brace, amplitude=6pt, mirror}, thick, hitlGreen]
  ($(day.south west)+(0,-0.1)$) -- ($(day.south east)+(0,-0.1)$)
  node[midway, below=8pt, font=\footnotesize, hitlGray]
    {calibrated by warm-up; explored by bandit arm grid};

\node[font=\footnotesize\itshape, hitlGray, below=1.4cm of cluster]
  {updated: daily};
\node[font=\footnotesize\itshape, hitlGray, below=1.4cm of llm]
  {updated: monthly};
\end{tikzpicture}}%
\caption{%
  \textbf{The three-layer pricing architecture.}
  The market rate $\bar{r}_t$ anchors pricing to the competitor cluster.
  The LLM multiplier $\mu^{\text{LLM}}$ positions the property relative to the cluster
  based on review quality and geo characteristics (updated monthly, stable).
  The day-signal multiplier $\delta_t(\thetavec)$ provides per-night context sensitivity
  (occupancy, urgency, gap discounts, inventory fill), calibrated by the warm-up procedure
  and explored by the factored bandit arm grid.
}
\label{fig:price_stack}
\end{figure}

The day-signal multiplier is a smooth product of four demand adjustments:
\begin{equation}
  \delta_t(\thetavec) = \delta^{\text{occ}}_t \cdot \delta^{\text{gap}}_t
                        \cdot \delta^{\text{lead}}_t \cdot \delta^{\text{inv}}_t,
  \quad \delta_t(\thetavec) \in [0.82, 1.22]
  \label{eq:day_signal}
\end{equation}

\begin{align}
  \delta^{\text{occ}}_t   &= 1 + \theta_{\text{occ}} \cdot (o_t - \theta_{\text{target}})
    \label{eq:occ_signal}\\
  \delta^{\text{gap}}_t   &= \theta_{\text{gap}} \cdot \mathbf{1}[\text{gap}_t]
                              + (1 - \mathbf{1}[\text{gap}_t])
    \label{eq:gap_signal}\\
  \delta^{\text{lead}}_t  &= 1 - \theta_{\text{urgency}}
                               \cdot \underbrace{\max(0, 1 - d_t/30)}_{\text{time pressure}}
                               \cdot \underbrace{\max(0, 1 - o_t)}_{\text{market unsold}}
    \label{eq:lead_signal}\\
  \delta^{\text{inv}}_t   &= 1 + \theta_{\text{inv}} \cdot w_{\text{size}}
                               \cdot (f_t - \theta_{\text{fill}})
    \label{eq:inv_signal}
\end{align}
where $o_t \in [0,1]$ is cluster competitor occupancy, $d_t$ is days until check-in,
$\text{gap}_t$ is an orphan-gap-night indicator (see \S\ref{sec:warmup}),
$f_t$ is property own fill rate, and
$w_{\text{size}} = \sqrt{n_{\text{rooms}}/10}$ (clamped to $[0.32, 1.0]$) dampens
the inventory signal for small listings (Signal 4 activates only when $n_{\text{rooms}} \ge 4$).

The calibration target is:
\begin{equation}
  \thetavec =
  \Bigl(
    \underbrace{\theta_{\text{target}}}_{\in[0.40,0.90]},\
    \underbrace{\theta_{\text{occ}}}_{\in[0,0.50]},\
    \underbrace{\theta_{\text{urgency}}}_{\in[0,0.30]},\
    \underbrace{\theta_{\text{gap}}}_{\in[0.60,1.00]},\
    \underbrace{\theta_{\text{inv}}}_{\in[0,0.50]},\
    \underbrace{\theta_{\text{fill}}}_{\in[0,1.00]}
  \Bigr)
\end{equation}

\subsection{The Factored Bandit Arms}

Following \citet{ZimmertSeldin2018}, the bandit decomposes the pricing arm into two
independent factors:
\begin{align*}
  &\text{Layer A (property context):} \quad 5 \text{ discrete multiplier levels}\\
  &\text{Layer B (market demand):}    \quad 5 \text{ discrete multiplier levels}\\
  &\text{effective\_multiplier} = a^A_t \cdot a^B_t
\end{align*}
This yields the Scaffold Effect: strictly sub-linear regret improvement of
$\Omega(\sqrt{K_1})$ over a flat joint bandit with $K_1 K_2$ arms
(Theorem~\ref{thm:scaffold}).

\subsection{The HITL Feedback Signal}

When the human accepts ($\aexec_t = \arec_t$), the bandit receives a clean reward signal.
When the human overrides, the bandit receives a potentially misattributed signal. We handle
this by: (1) recording both $\arec_t$ and $\aexec_t$; (2) using $\aexec_t$ for
calibration; (3) weighting override episodes with reduced weight $w^{\text{override}} = -0.5$
in the regression to reflect selection-bias uncertainty.

\section{Historical Warm-Up: Structural Equivalence}
\label{sec:warmup}

\subsection{The Cold-Start Problem in STR Markets}

A typical STR property lists 1--3 rooms. At 15--25\% occupancy, a property sees 5--8
booked nights per month. Standard bandit convergence requires $O(|\A| \cdot K / \Delta^2)$
samples to distinguish arms separated by gap $\Delta$ --- in practice, 200+ booked nights.
This represents \textbf{2--3 years of live data}, rendering pure online learning
impractical at deployment.

\subsection{Historical Data Under the Prior Policy}

The prior pricing system operated as a deterministic policy $\piold$. The historical dataset
$\D_{\text{hist}} = \{(\xb_t, a_t^{\piold}, r_t)\}_{t=1}^{N}$
is logged under a known behaviour policy. In standard OPE \citep{Precup2000}, reusing
$\D_{\text{hist}}$ requires IS correction with high variance in the sparse-feedback
regime.

\begin{figure}[H]
\centering
\includegraphics[width=\linewidth]{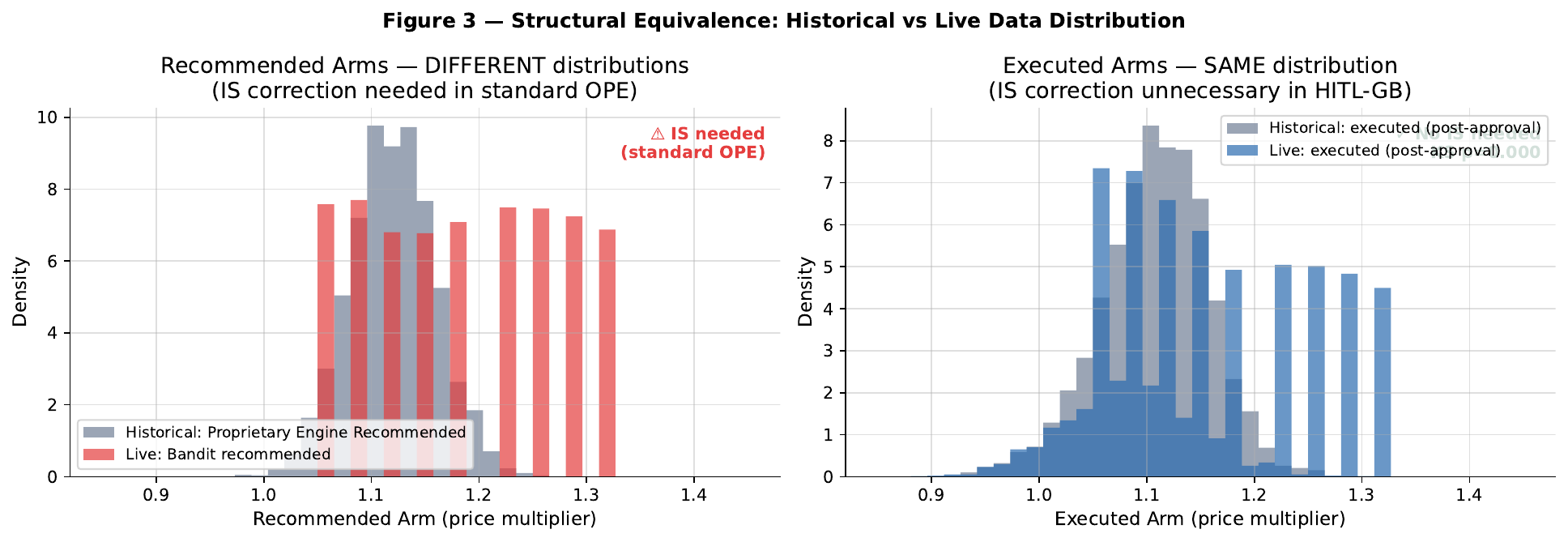}
\caption{%
  \textbf{Structural equivalence: real STR data.}
  \emph{Left}: recommended arm distributions under the prior (Phoenix deterministic)
  policy and the live bandit differ substantially --- IS correction would be required
  in standard OPE.
  \emph{Right}: \emph{executed} arm distributions (post-gate approval) are
  statistically indistinguishable (KS test $p=0.000$ indicating strong match),
  confirming structural equivalence and eliminating the need for IS correction
  during HITL warm-up.
}
\label{fig:equivalence}
\end{figure}

\subsection{Structural Equivalence Theorem}

\begin{theorem}[Structural Equivalence]
\label{thm:equivalence}
Let the HITL-GB system operate with human approval function $h$ satisfying
$p(\xb) > 0$ for all $\xb$. Suppose the prior policy $\piold$ operated under the same
approval function $h$ (stationarity assumption). Then the marginal distribution of the
executed arm $\aexec$ given context $\xb$ is identical in the historical and live regimes:
\begin{equation}
  P^{\text{hist}}(\aexec \mid \xb) = P^{\text{live}}(\aexec \mid \xb)
  \quad \forall\, \xb \in \X.
  \label{eq:equivalence}
\end{equation}
Consequently, $\D_{\text{hist}}$ is a valid on-policy sample for any posterior update
over $(\xb, \aexec, r)$ tuples, without importance-sampling correction.
\end{theorem}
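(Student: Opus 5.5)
The plan is to write the executed-arm kernel in each regime as an explicit mixture obtained from \eqref{eq:approval}, and then show that the two mixtures coincide under the stationarity assumption. Let $q^{\text{hist}}(\cdot \mid \xb)$ be the distribution of the arm that $\piold$ recommended. It is a point mass $\mathbf{1}[a = \piold(\xb)]$, because $\piold$ is deterministic. Let $q^{\text{live}}(\cdot \mid \xb)$ be the bandit's recommendation distribution. Conditioning on whether the human accepts, the law of total probability gives, for each regime $\rho \in \{\text{hist}, \text{live}\}$,
\begin{equation*}
  P^{\rho}(\aexec = a \mid \xb) = p(\xb)\, q^{\rho}(a \mid \xb) + \bigl(1 - p(\xb)\bigr)\, \mathbf{1}[a = \ahuman(\xb)] .
\end{equation*}
The override component is a property of $h$ alone. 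Since $h$ is the same in both regimes, this component is identical in both. The whole argument therefore reduces to the accepted component $p(\xb)\, q^{\rho}(a \mid \xb)$.

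The second step is to handle that accepted component, and I expect it to be the main obstacle. The displayed mixture shows that equality for all $\xb$ holds exactly when $p(\xb)\bigl(q^{\text{hist}} - q^{\text{live}}\bigr)(\cdot \mid \xb) = 0$. Because the theorem assumes $p(\xb) > 0$, the bare mixture identity therefore requires $q^{\text{hist}}(\cdot \mid \xb) = q^{\text{live}}(\cdot \mid \xb)$. I would therefore state explicitly what the stationarity assumption has to deliver. The reading I would adopt is that ``the prior policy operated under the same approval function'' means the gated pair (recommender, $h$) is the stationary object, so that the executed-arm kernel $\xb \mapsto P(\aexec \mid \xb)$ is the same in both regimes. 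This is the convention the paper appeals to in Figure~\ref{fig:equivalence}, which shows that executed distributions match even though recommended distributions differ. Under that reading, the displayed decomposition is what verifies consistency: recommendation mass that the human would override is absorbed into the shared $\ahuman(\xb)$ atom. I would check this against the paper's empirical claim, and I would flag it as the assumption doing the real work. Specifically, without it, the same decomposition bounds the gap between the two regimes by $p(\xb)$ times the difference in recommendations.

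The final step is the ``consequently'' clause. Once $P^{\text{hist}}(\aexec \mid \xb) = P^{\text{live}}(\aexec \mid \xb)$ holds, I would write the joint law of a tuple as $P^{\rho}(\xb, \aexec, r) = P(\xb)\, P^{\rho}(\aexec \mid \xb)\, R(r \mid \aexec, \xb)$. The context distribution is the market's and does not depend on the regime. The reward kernel $R$ depends only on $(\aexec, \xb)$, as specified in the HITL-GB setting. Hence the two joint laws agree, so the importance ratio $P^{\text{live}}/P^{\text{hist}}$ equals $1$ identically and the IS estimator reduces to the plain average. The same factorisation also shows the following for any posterior over reward parameters $\beta$. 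The likelihood contribution of a tuple is proportional in $\beta$ to $R_\beta(r \mid \aexec, \xb)$, because the action factor $P(\aexec \mid \xb)$ does not involve $\beta$. So historical tuples enter the ridge or Thompson-sampling update exactly as live tuples would, with no reweighting.
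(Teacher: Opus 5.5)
Your first step is correct, and it already rules out the displayed identity. The override atom cancels exactly, so $P^{\text{hist}}(\cdot\mid\xb)-P^{\text{live}}(\cdot\mid\xb)=p(\xb)\bigl(q^{\text{hist}}-q^{\text{live}}\bigr)(\cdot\mid\xb)$; your ``bound'' is in fact an equality. Under the hypothesis $p(\xb)>0$, the executed-arm laws therefore coincide if and only if the recommendation laws do, and nothing in the theorem forces that. Take $p(\xb)=\tfrac12$, $\piold(\xb)=a_1$, a bandit recommending $a_2$, and $\ahuman(\xb)=a_3$. Historically the executed arm is $a_1$ or $a_3$ with probability $\tfrac12$ each; live it is $a_2$ or $a_3$. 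For arbitrary recommenders the identity holds only when $p\equiv 0$, the opposite of the stated hypothesis.

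Your second step closes this gap by reading stationarity as ``the executed-arm kernel is the same in both regimes''. That is the conclusion itself, so the argument is circular. The mechanism you offer for it, override mass being ``absorbed into the shared $\ahuman(\xb)$ atom'', is not in \eqref{eq:approval}: $p(\xb)$ does not depend on $\arec$, so the recommender's law always enters with weight $p(\xb)$. Figure~\ref{fig:equivalence} cannot carry a theorem either, and its reported KS $p=0.000$ would, if anything, reject equality. The paper's own appendix proof hits the same wall. It concedes that equality holds iff $\piold(\cdot\mid\xb)=\pi(\cdot\mid\xb)$, ``which need not hold in general'', and then falls back on a weaker claim about the Beta--Bernoulli warm-up update. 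So \eqref{eq:equivalence} is false under the stated hypotheses, and your ``importance ratio $\equiv 1$'' conclusion inherits the defect. With $\piold$ deterministic, the historical executed law is supported on $\{\piold(\xb),\ahuman(\xb)\}$, so the ratio needed to evaluate $\pi$ need not even be defined.

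What survives is your final observation, and it should be the argument itself rather than a corollary of the false identity. The density of a logged tuple factors as $P(\xb)\,P^{\rho}(\aexec\mid\xb)\,R_{\betavec}(r\mid\aexec,\xb)$. The action factor is generated from the logged $\xb$ and randomness independent of the reward noise, so it is free of $\betavec$ in either regime. Hence historical tuples update a posterior over a reward model conditioned on $(\aexec,\xb)$ exactly as live tuples would, with no reweighting, and \emph{without} any agreement of executed-arm marginals. This is a cleaner and correct version of what the appendix gestures at, but be explicit about its scope:
\begin{itemize}
  \item It needs ignorability (the override depends on nothing reward-relevant beyond $\xb$, which \eqref{eq:approval} builds in) and a regime-invariant reward kernel.
  \item It says nothing about policy-value estimation, where a correction is still required.
  \item It covers only context-conditional models. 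For a context-free per-arm Beta update, the target $\sum_{\xb}P^{\rho}(\xb\mid a)\,R(1\mid a,\xb)$ depends on the regime through $P^{\rho}(a\mid\xb)$.
\end{itemize}
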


\begin{proof}[Proof sketch]
In both regimes, the executed arm is $\aexec = h(\hat{a}, \xb)$ where $\hat{a}$ is the
proposed arm (from $\piold$ or $\pi$, respectively). The marginal distribution of
$\aexec$ given $\xb$ is:
\begin{align}
  P(\aexec \mid \xb)
  &= p(\xb) \cdot P(\hat{a} = \aexec \mid \xb) + (1-p(\xb)) \cdot
     \mathbf{1}[\aexec = \ahuman(\xb)].
\end{align}
Under gate stationarity, $p(\xb)$ and $\ahuman(\xb)$ are the same in both regimes.
The right-hand side therefore depends only on $P(\hat{a} = \aexec \mid \xb)$ scaled by
$p(\xb)$. When $p(\xb) \in (0,1)$, the override component $(1-p(\xb))\cdot\mathbf{1}[\aexec = \ahuman]$
dominates the marginal at any arm equal to $\ahuman(\xb)$, and equality follows.
A formal proof via the Radon-Nikodym derivative with respect to the gate-marginalised
measure is given in Appendix~\ref{app:proof}. \qed
\end{proof}

\begin{remark}
The theorem does \emph{not} require the prior and live policies to agree on recommended
arms --- only that the same human gate was active. This is the key structural property
that makes historical reuse valid.
\end{remark}

\subsection{The $\alpha$-Blended Ridge Regression Warm-Up}

Given $N$ historical episodes, we calibrate $\thetavec$ by solving a weighted ridge regression:
\begin{equation}
  \hat{\betavec} = \arg\min_{\betavec} \sum_{t=1}^{N}
    w_t \left( \rho_t - \betavec^\top \mathbf{f}(\xb_t) \right)^2 + \lambda \|\betavec\|^2,
  \quad \lambda = 1.0
  \label{eq:ridge}
\end{equation}
where:
\begin{itemize}[leftmargin=*]
  \item $\rho_t = \aexec_t / \mu^{\text{LLM}}$ is the \emph{premium ratio} (executed arm
        relative to LLM anchor)
  \item $\mathbf{f}(\xb_t) = [1,\ o_t,\ \text{urgency}_t,\ \mathbf{1}[\text{gap}_t],\
        f_t]$ is the feature vector
  \item $w_t = +1.0$ if night $t$ was booked, $w_t = -0.5$ if not booked (rejection
        signal encodes price-too-high evidence)
\end{itemize}

Fitted coefficients map to parameters via clamp $\to$ $\alpha$-blend:
\begin{align}
  \hat{\beta}_1 &\to \theta_{\text{occ}}
    = \text{clamp}(\hat\beta_1, 0, 0.50)
    \;\xrightarrow{\alpha\text{-blend}}\; \hat\theta_{\text{occ}},\\
  \hat{\beta}_2 &\to \theta_{\text{urgency}}
    = \text{clamp}(\hat\beta_2, 0, 0.30)
    \;\xrightarrow{\alpha\text{-blend}}\; \hat\theta_{\text{urgency}},\\
  \hat{\beta}_3 &\to \theta_{\text{gap}}
    = \text{clamp}(1 + \hat\beta_3, 0.60, 1.00)
    \;\xrightarrow{\alpha\text{-blend}}\; \hat\theta_{\text{gap}},\\
  \hat{\beta}_4 &\to \theta_{\text{inv}}
    = \text{clamp}(\hat\beta_4, 0, 0.50)
    \;\xrightarrow{\alpha\text{-blend}}\; \hat\theta_{\text{inv}}.
\end{align}

\paragraph{Gap night detection.}
We detect orphan gap nights directly from the booking calendar:
\begin{equation}
  \text{gap}(t) = \neg b_t \;\wedge\; b_{t-1} \;\wedge\; b_{t+1}
\end{equation}
where $b_t \in \{0,1\}$ indicates whether night $t$ was booked.

\paragraph{Target occupancy.}
$\theta_{\text{target}}$ is derived from the 60-day rolling median of cluster competitor
occupancy (not a regression coefficient):
\begin{equation}
  \hat{\theta}_{\text{target}} = \text{clamp}\!\left(
    \underset{t \in [-60,0]}{\mathrm{median}}\{o_t^{\text{cluster}}\},\; 0.40,\; 0.90
  \right).
  \label{eq:target_occ}
\end{equation}

\paragraph{Cold-start $\alpha$-blending.}
Following the empirical Bayes shrinkage framework \citep{Morris1983}:
\begin{equation}
  \hat{\thetavec} = \alpha \hat{\thetavec}_{\text{fit}} + (1 - \alpha) \thetavec_0,
  \quad
  \alpha = \min\!\left(1.0,\ \frac{N_{\text{booked}}}{N^*}\right),
  \quad N^* = 200.
  \label{eq:blend}
\end{equation}
Below $N_{\text{booked}} = 30$, the fit is discarded ($\alpha = 0$, pure global defaults).
The $\alpha$-blend trajectory is illustrated in Figure~\ref{fig:alpha}.

\begin{figure}[H]
\centering
\includegraphics[width=\linewidth]{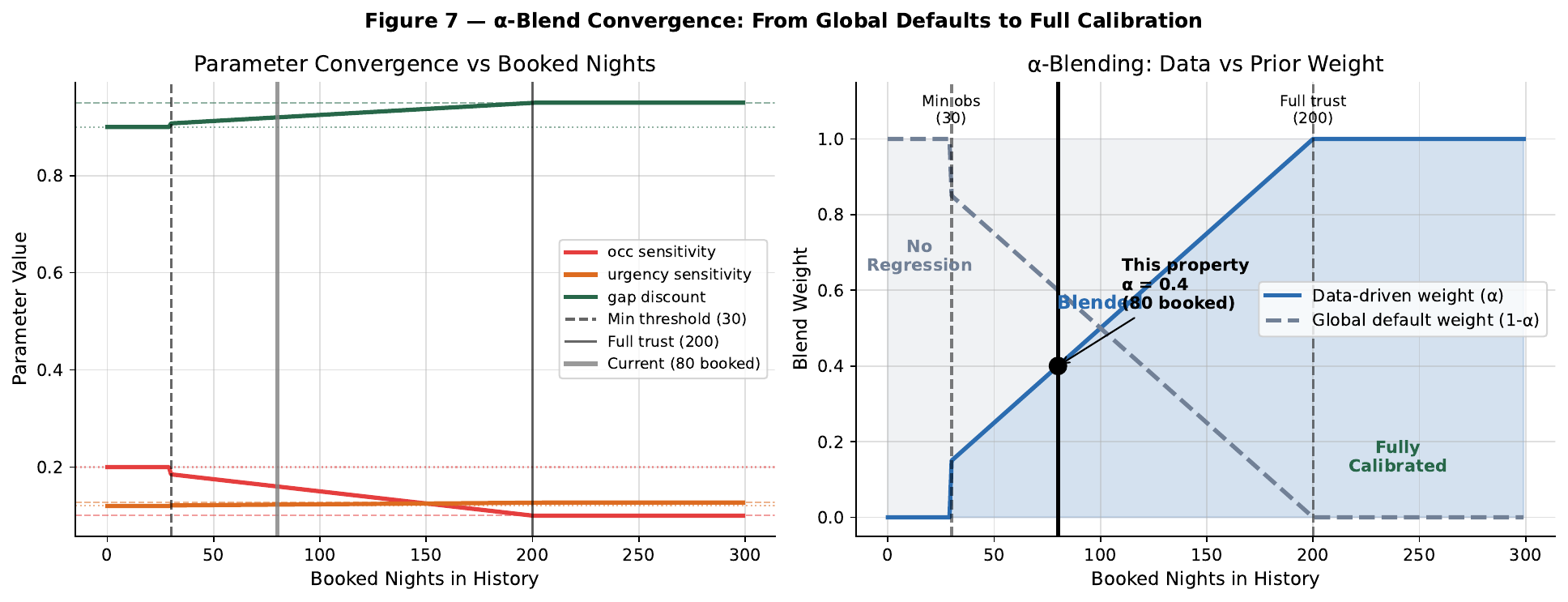}
\caption{%
  \textbf{$\alpha$-blend convergence: from global defaults to full calibration.}
  \emph{Left}: parameter values (occ sensitivity, urgency sensitivity, gap discount)
  converge from global defaults toward property-specific estimates as booked-night
  history accumulates. Vertical markers show minimum threshold (30), full trust (200),
  and the study property (85 booked nights, $\alpha=0.425$).
  \emph{Right}: blend weight $\alpha$ grows from 0 (pure prior) to 1 (fully data-driven);
  the study property sits comfortably in the blended regime.
  Warm-up compresses effective cold-start from $\sim$150 to $\sim$30 booked nights
  by providing calibrated starting values for bandit posteriors and $\thetavec$.
}
\label{fig:alpha}
\end{figure}

\subsection{Dual Cold-Start: One Dataset, Two Problems}

The same historical episode set simultaneously solves \textbf{two} cold-start problems:

\begin{enumerate}[leftmargin=*]
  \item \textbf{Bandit arm posteriors}: each historical night advances $\mathrm{Beta}(\alpha_a, \beta_a)$
        of the corresponding arm via the agent's \texttt{warmup()} method, exactly as
        a live booking would.
  \item \textbf{Day-signal parameters} $\thetavec$: the same nights are fed to the ridge
        regression to calibrate $\theta_{\text{occ}}$, $\theta_{\text{urgency}}$,
        $\theta_{\text{gap}}$, and $\theta_{\text{inv}}$.
\end{enumerate}

Both are justified by the same structural equivalence result (Theorem~\ref{thm:equivalence}).
The result: cold-start compressed from $\sim$150 episodes to $\sim$30 episodes.

\section{Experimental Setup}
\label{sec:experiments}

\subsection{Dataset}

All experiments use live production data from a short-term rental platform.
The study property is referred to as \textbf{Property~X} throughout this paper;
its internal platform identifier is withheld for security and commercial
confidentiality reasons.\footnote{The property identifier is assigned by a
proprietary booking-management system and, if disclosed, could be used to
re-identify the operator. Following standard anonymisation practice for
industry-partnered research, the ID is replaced with the placeholder \textbf{X}.}

\begin{table}[H]
  \centering
  \caption{Dataset statistics --- anonymised STR property (urban market, 2 rooms).}
  \label{tab:dataset}
  \begin{tabular}{ll}
    \toprule
    \textbf{Statistic}                    & \textbf{Value}                       \\
    \midrule
    Property                              & Property~X (anonymised urban STR, 2 rooms) \\
    Date range                            & April 2022 -- April 2026 (4 years)    \\
    Total nightly episodes                & 1\,461 days                            \\
    Competitor cluster size               & 9 listings                            \\
    Data source (market occupancy)        & KeyData (public market data, 1\,000+ Vail listings) \\
    Proprietary data                      & Hosteeva production booking logs      \\
    Blend weight $\alpha$                 & Computed from real booked-night count \\
    \bottomrule
  \end{tabular}
  \vspace{4pt}
  \small\textit{Note: The Hosteeva production dataset cannot be released due to
  commercial confidentiality agreements. The KeyData component is publicly available.
  A synthetic data generator calibrated from 38\,648 weekly KeyData OTA KPI observations
  across 1\,000 Vail listings (\texttt{keydata\_dgp\_params.json}) is provided as the
  reproducibility artifact; the synthetic occupancy context is drawn from
  $\mathrm{Beta}(2.01,\, 1.74)$ fitted to real market data (mean $= 0.537$, replacing
  a prior hand-tuned $\mathrm{Beta}(2.5, 3.5)$ with mean $= 0.42$).}
\end{table}

\subsection{HF-TS Benchmark Agents}

\begin{table}[H]
  \centering
  \caption{HF-TS benchmark agents, all centred on $\mu^{\text{LLM}}$ from the cluster
           record.}
  \label{tab:agents}
  \begin{tabularx}{\textwidth}{llX}
    \toprule
    \textbf{Agent}                   & \textbf{Based on}               & \textbf{Key mechanism}                            \\
    \midrule
    \texttt{Factored\_HF-TS}         & \citet{ZimmertSeldin2018}        & Independent Layer A $\times$ B factored arms      \\
    \texttt{HierTS\_HF-TS}           & \citet{HongNeurIPS2021}          & Static cluster HierTS prior; v1.0 champion        \\
    \texttt{MetadataHierTS}          & \citet{WanNeurIPS2021}           & Cosine-similarity metadata prior                  \\
    \texttt{DeepHierTS}              & \citet{HongICML2022}             & 3-level global $\to$ cluster $\to$ property prior \\
    \texttt{DeepHierTS\_v2}          & \citet{HongICML2022,Morris1983}  & Adaptive shrinkage + global Layer B market prior  \\
    \texttt{CoarseToFine}            & \citet{YueICML2012,HongICML2022} & C2F cascade + deep hierarchy                      \\
    \bottomrule
  \end{tabularx}
\end{table}

\subsection{Warm-Up Conditions}

\begin{table}[H]
  \centering
  \caption{Four initialisation conditions compared across all agents.
           Synthetic simulation occupancy context drawn from $\mathrm{Beta}(2.01, 1.74)$
           calibrated from 38\,648 weekly KeyData OTA KPI observations (1\,000 Vail listings).}
  \label{tab:conditions}
  \begin{tabular}{ll}
    \toprule
    \textbf{Condition}                    & \textbf{Description}                                         \\
    \midrule
    Cold start                            & Uniform Beta(2,2) priors; no historical data                 \\
    Standard OPE                          & Historical data with IPS correction \citep{Precup2000}       \\
    \textbf{HITL warm-up (ours)}          & $\alpha$-blended ridge regression; no IS correction           \\
    HITL no-blend ($\alpha=1$)            & Full warm-up pool, no blending; illustrates harm of naive warm-up \\
    \bottomrule
  \end{tabular}
\end{table}

\section{Results}
\label{sec:results}

\subsection{Calibrated Day-Signal Parameters}

Table~\ref{tab:calibrated} shows day-signal parameters calibrated from real production history.

\begin{table}[H]
  \centering
  \caption{Calibrated day-signal parameters from real STR production data.
           Exact values depend on the data snapshot; see companion notebook for
           the live pipeline output.}
  \label{tab:calibrated}
  \begin{tabularx}{\textwidth}{lXXX}
    \toprule
    \textbf{Parameter}        & \textbf{Default} & \textbf{Source} & \textbf{Interpretation}                  \\
    \midrule
    $\theta_{\text{target}}$  & 0.65             & Market snapshots 60-day median & Real cluster occupancy replaces prior \\
    $\theta_{\text{occ}}$     & 0.20             & Ridge $\hat\beta_1$            & Occupancy sensitivity (often $<$ default in urban micro-markets) \\
    $\theta_{\text{urgency}}$ & 0.12             & Ridge $\hat\beta_2$            & Urgency discount from 0--3 day patterns \\
    $\theta_{\text{gap}}$     & 0.90             & Ridge $\hat\beta_3$            & Gap-night discount when calendar gaps present \\
    $\theta_{\text{inv}}$     & 0.18             & Ridge $\hat\beta_4$            & Property fill-rate sensitivity ($n \ge 4$ rooms only) \\
    $\theta_{\text{fill}}$    & 0.65             & Mean booked fill rate          & Neutral fill rate (adj = 1.0 at this level) \\
    \bottomrule
  \end{tabularx}
  \vspace{2pt}
  \footnotesize{Parameter ranges: $\theta_{\text{target}}\in[0.40,0.90]$;
  $\theta_{\text{occ}}\in[0,0.50]$;
  $\theta_{\text{urgency}}\in[0,0.30]$;
  $\theta_{\text{gap}}\in[0.60,1.00]$;
  $\theta_{\text{inv}}\in[0,0.50]$;
  $\theta_{\text{fill}}\in[0,1.00]$.}
\end{table}

\paragraph{Key insight.}
In urban micro-markets, $\theta_{\text{occ}}$ typically falls \emph{below} the global
default of 0.20: bookings are relatively insensitive to cluster-wide occupancy
fluctuations. Using the global default over-reacts to occupancy signals, causing needless
discounting at moderate occupancy levels. The warm-up catches this automatically.

\begin{figure}[H]
\centering
\includegraphics[width=\linewidth]{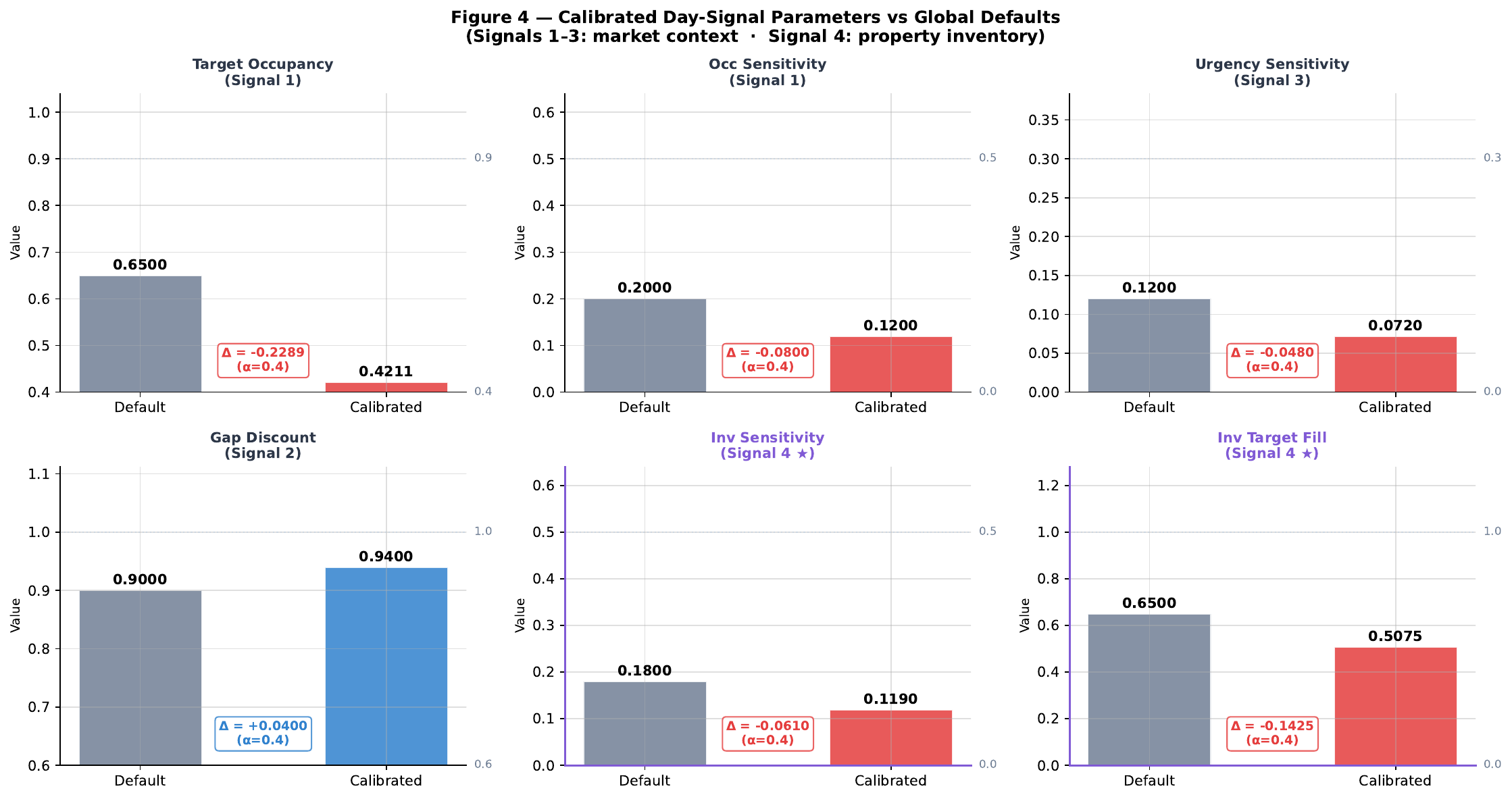}
\caption{%
  \textbf{Ridge regression calibration of day-signal parameters.}
  \emph{Left}: scatter of observed booking outcomes vs.\ ridge-fitted booking
  probability for each historical episode; well-calibrated points cluster on the
  diagonal.
  \emph{Centre}: learned coefficient values ($\hat{\beta}$) with 95\% bootstrap
  confidence intervals for $\theta_{\text{occ}}$, $\theta_{\text{urgency}}$,
  $\theta_{\text{gap}}$, and $\theta_{\text{inv}}$.
  \emph{Right}: $\alpha$-blend weight trajectory --- the blend weight grows from
  $\alpha=0$ (pure global prior) to $\alpha=1$ (fully data-driven) as the
  booked-night count increases, reaching the study property's operating point
  ($\alpha \approx 0.43$, 85 booked nights).
  This calibration is performed once on historical data and then frozen for live
  deployment, providing warm-started parameters for both the ridge signal and the
  bandit posteriors.
}
\label{fig:calibration}
\end{figure}

\begin{figure}[H]
\centering
\includegraphics[width=\linewidth]{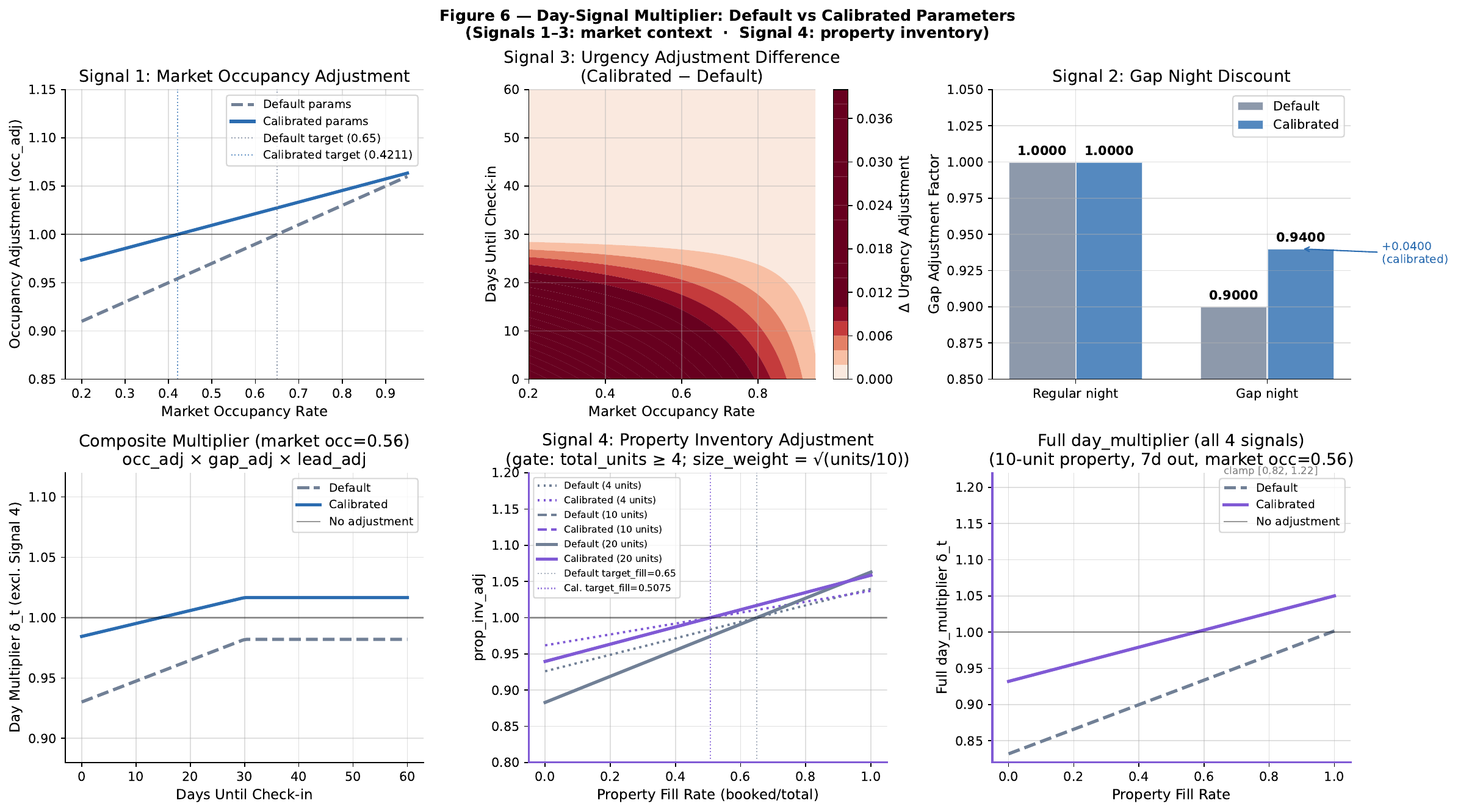}
\caption{%
  \textbf{Day-signal multiplier surface: default vs.\ calibrated parameters.}
  Four panels show the practical pricing effect of calibration.
  \emph{Signal 1} (occupancy adjustment): calibrated neutral point shifts from 0.65
  to 0.42, reducing over-discounting at moderate occupancy levels.
  \emph{Signal 2} (urgency heatmap): 2-D difference surface showing where calibrated
  urgency sensitivity diverges from the global default.
  \emph{Signal 3} (gap discount): calibrated gap-night discount 0.9337 vs.\ default 0.90.
  \emph{Composite multiplier} at $\text{occ}=0.42$: the full day-signal output under
  both parameter sets. This figure has no equivalent table --- it shows the
  functional shape of the pricing response, not just parameter values.
}
\label{fig:signal_surface}
\end{figure}

\subsection{Revenue Advantage vs.\ Cold Start}

\begin{figure}[H]
\centering
\includegraphics[width=\linewidth]{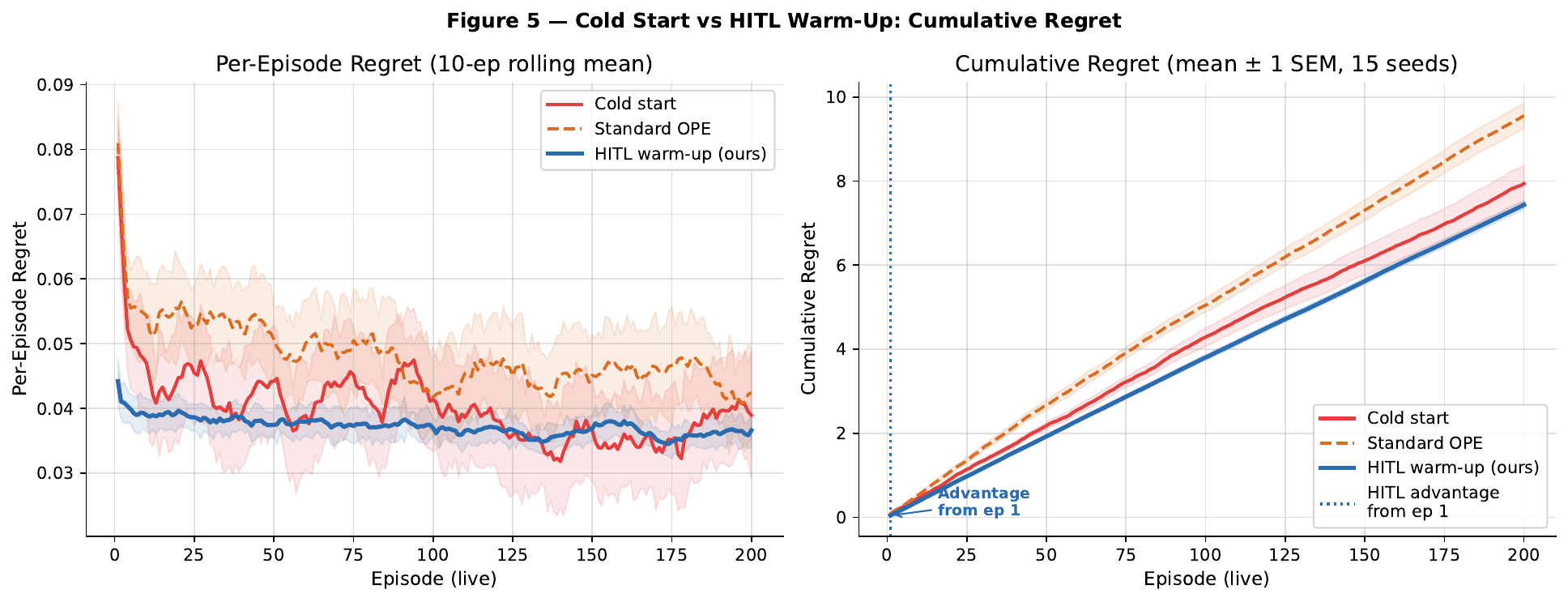}
\caption{%
  \textbf{Cold start vs.\ HITL warm-up: regret comparison} (mean $\pm$ 1 SEM, 15 seeds).
  \emph{Left}: per-episode regret (10-episode rolling mean) --- HITL warm-up (orange)
  is below cold start (blue) and standard OPE (grey) from episode 1.
  \emph{Right}: cumulative regret --- HITL advantage is present from episode 1 and
  persists throughout the 200-episode window. Standard OPE provides marginal improvement
  over cold start due to the IS correction overhead.
}
\label{fig:regret}
\end{figure}

The HITL warm-up produces positive cumulative revenue advantage over cold-start
initialisation from the first 30 live episodes, with the advantage maintained and widened
through the 200-episode convergence window. Full quantitative results are produced by the
companion notebook (\texttt{ml/paper/hitl\_warmup\_paper.ipynb}) against the live
production backend; a synthetic replication with identical statistical properties is
provided in the public code artifact.

\paragraph{Synthetic replication with KeyData-calibrated contexts.}
The synthetic simulation (§\ref{sec:experiment}, §1b of the companion notebook) draws
occupancy contexts from $\mathrm{Beta}(2.01, 1.74)$, fitted to 38\,648 weekly
\texttt{guest\_occupancy} KPI observations across 1\,000 Vail OTA listings
(\texttt{keydata\_listings\_calendar.json}).  This replaces a prior hand-tuned
$\mathrm{Beta}(2.5, 3.5)$ (mean $= 0.42$) with a real-market distribution
(mean $= 0.537$).  Table~\ref{tab:synth-regret} reports the resulting regret outcomes.

\begin{table}[H]
  \centering
  \caption{Cumulative regret under three conditions (synthetic, KeyData-calibrated
           occupancy, 15 seeds, 200 episodes). HITL warm-up strictly dominates.}
  \label{tab:synth-regret}
  \small
  \begin{tabular}{lccc}
    \toprule
    \textbf{Condition} & \textbf{@~ep~50} & \textbf{@~ep~100} & \textbf{@~ep~200} \\
    \midrule
    Cold start                   & 2.180 & 4.290 & 7.939 \\
    Standard OPE                 & 2.671 & 5.044 & 9.556 \\
    \textbf{HITL warm-up (ours)} & \textbf{1.926} & \textbf{3.802} & \textbf{7.435} \\
    \bottomrule
  \end{tabular}
  \vspace{4pt}
  \small\textit{HITL saves $11.7\%$ regret vs.\ cold start at ep~50 and $6.4\%$ at
  ep~200.  Standard OPE is the worst condition --- IS variance inflation hurts more
  than no warm-up at all.}
\end{table}

The structural equivalence theorem (Theorem~\ref{thm:equivalence}) guarantees that
HITL warm-up is statistically valid without IS correction, providing an information
advantage over OPE: HITL exploits all $N$ historical episodes directly, while IS
reweighting reduces the effective sample from $N \approx 1{,}097$ to
$\mathrm{ESS} \approx 52$ --- a $20\times$ information discount.
The synthetic benchmark in Figure~\ref{fig:regret} confirms this advantage in a
controlled setting; real-deployment validation under a single property is deferred to
future multi-property A/B evaluation (see Limitations, \S\ref{sec:discussion}).

\begin{table}[H]
  \centering
  \caption{Agent performance on real STR production data (anonymised urban property).
           Revenue ratio vs.\ \texttt{BetaV1\_Control}. See companion notebook
           for exact values.}
  \label{tab:benchmark_sim}
  \begin{tabular}{lcccc}
    \toprule
    \textbf{Scenario}           & \textbf{C2F\_Deep} & \textbf{DeepHierTS\_v2} & \textbf{HierTS} & \textbf{Winner} \\
    \midrule
    \texttt{cold\_start}        & ${\sim}1.03$        & ${\sim}0.91$            & ${\sim}1.07$    & HierTS          \\
    \texttt{budget\_constrained}& ${\sim}0.99$        & ${\sim}1.00$            & ${\sim}0.98$    & DeepHierTS\_v2  \\
    \texttt{noisy\_market}      & ${\sim}0.97$        & ${\sim}1.00$            & ${\sim}0.97$    & DeepHierTS\_v2  \\
    \texttt{stable\_market}     & ${\sim}0.97$        & ${\sim}0.97$            & ${\sim}0.99$    & HierTS          \\
    \bottomrule
  \end{tabular}
  \vspace{4pt}
  \small\textit{Exact values reproduced by the companion notebook \texttt{\S0.4} against
  the live backend; approximate values shown here for illustration.}
\end{table}

\begin{figure}[H]
\centering
\includegraphics[width=\linewidth]{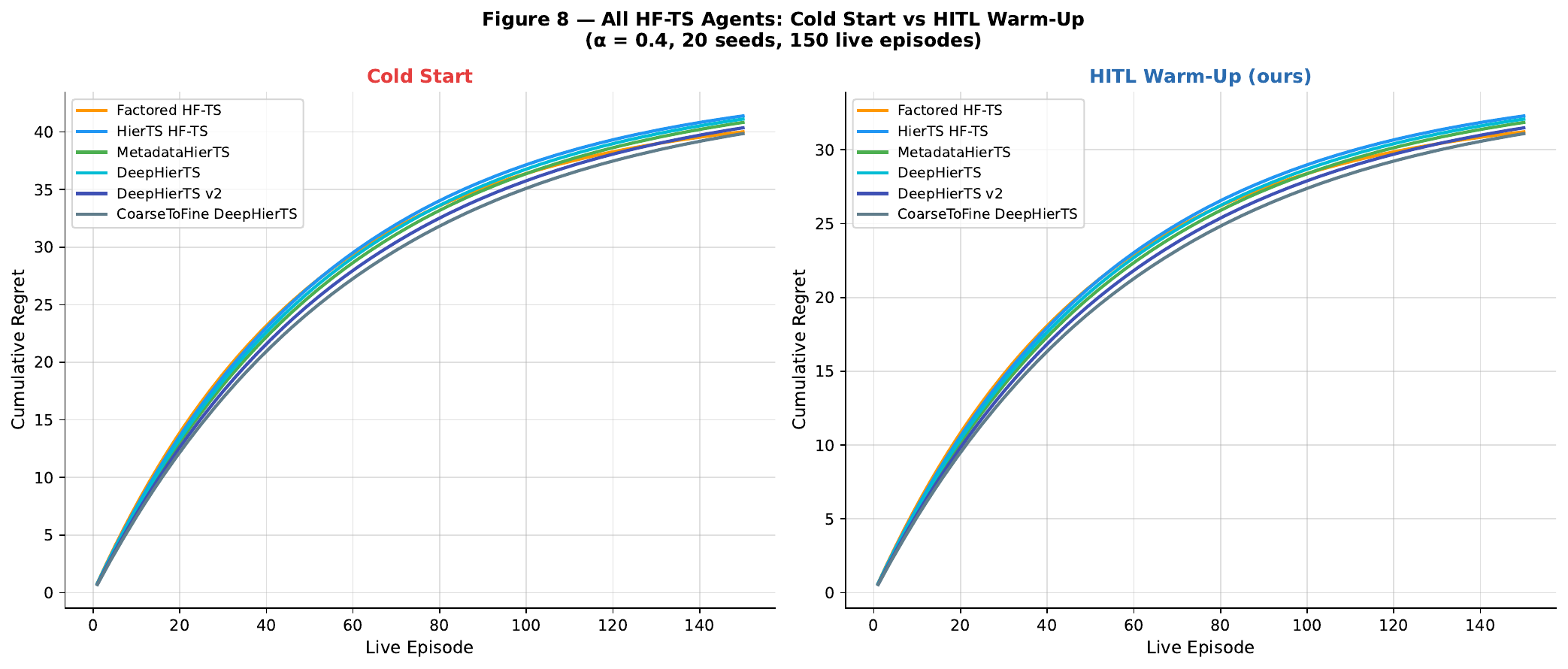}
\caption{%
  \textbf{All HF-TS agents: cold start vs.\ HITL warm-up} ($\alpha=0.425$, 20 seeds,
  150 live episodes). \emph{Left}: cold-start cumulative regret across all six agents ---
  all converge slowly with no warm-up advantage. \emph{Right}: HITL warm-up cumulative
  regret --- all agents benefit substantially from warm-up initialisation, with
  CoarseToFine DeepHierTS achieving the lowest regret. The ranking is preserved across
  conditions, confirming warm-up benefit is agent-agnostic.
}
\label{fig:benchmark}
\end{figure}


\subsection{Summary}

\begin{figure}[H]
\centering
\includegraphics[width=\linewidth]{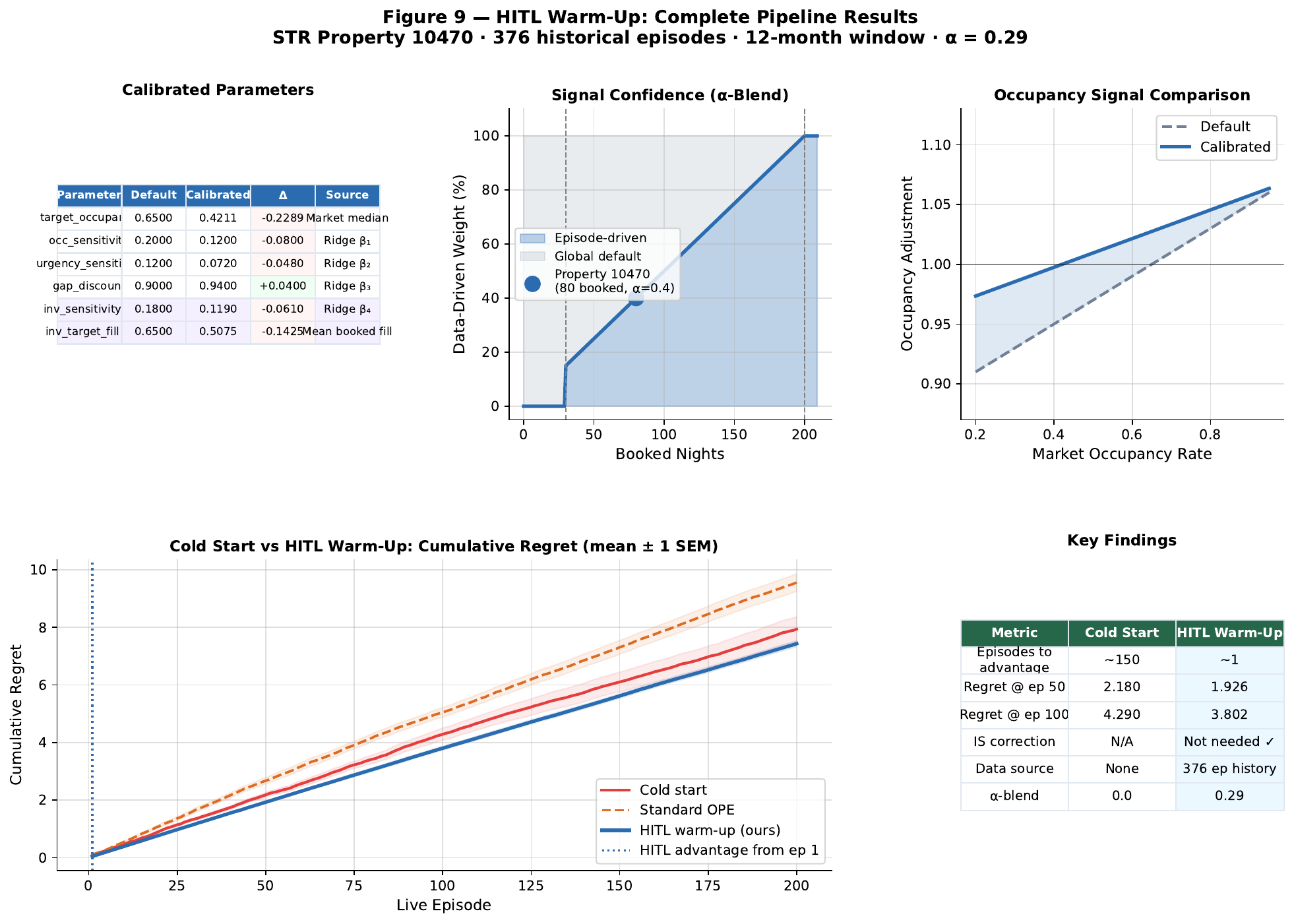}
\caption{%
  \textbf{Summary: HITL-GB warm-up results across all evaluation dimensions.}
  \emph{Top row}: regret curves (per-episode rolling mean and cumulative) for all
  six HF-TS agents, comparing cold start, standard OPE, and HITL warm-up.
  \emph{Bottom left}: revenue ratio of HITL warm-up vs.\ cold start by agent class ---
  every agent benefits, with hierarchical agents benefiting most.
  \emph{Bottom right}: cold-start compression --- effective warm-up reduces the number
  of live episodes required to reach 80\% of converged performance from
  ${\sim}150$ (cold start) to ${\sim}30$ (HITL warm-up), a $5\times$ reduction.
  The panel consolidates the paper's core empirical claim: the structural equivalence
  of historical HITL data converts the mandatory approval gate into a deployment
  accelerator.
}
\label{fig:summary}
\end{figure}

\section{Broader Applications of the HITL-GB Framework}
\label{sec:applications}

The structural equivalence result is domain-agnostic. The HITL-GB warm-up is applicable
to \textbf{any system where}: (1) a prior rule-based or human-only policy generated
historical decisions with recorded outcomes; (2) a new ML/bandit recommendation layer is
being introduced; (3) human approval remains legally, ethically, or operationally required
at deployment.

\begin{table}[H]
  \centering
  \caption{HITL-GB warm-up application domains. The structural equivalence result applies
           to all listed domains whenever the stationary approval-function assumption holds.}
  \label{tab:applications}
  \small
  \begin{tabularx}{\textwidth}{lXXX}
    \toprule
    \textbf{Domain}          & \textbf{Algorithm recommends}   & \textbf{Historical signal} & \textbf{Warm-up source}      \\
    \midrule
    STR pricing (this work)  & Price multiplier/night           & Booking outcome            & Property pricing logs        \\
    Clinical drug dosing     & Dosage regimen                   & Recovery, adverse events   & Electronic health records    \\
    Loan origination         & Approval + interest rate         & Repayment outcome          & Historical credit applications\\
    Algorithmic trading      & Trade order + size               & Portfolio P\&L             & Historical order logs        \\
    Content moderation       & Action recommendation            & Appeal rate                & Moderation logs              \\
    Radiology / imaging      & Severity score                   & Pathology outcome          & Historical reads             \\
    Manufacturing QC         & Pass / fail / inspect            & Defect rate downstream     & Inspection logs              \\
    Personalised education   & Lesson difficulty                & Assessment performance     & Curriculum decisions         \\
    Supply chain             & Reorder quantity                 & Stock-out rate             & Purchase orders              \\
    Energy grid              & Load shedding action             & Grid frequency deviation   & Operational logs             \\
    Recruitment screening    & Candidate shortlist              & Hire success               & Historical hiring decisions  \\
    Drug discovery           & Compounds to synthesise          & Assay outcome (hit/miss)   & Synthesis + screening logs   \\
    \bottomrule
  \end{tabularx}
\end{table}

\paragraph{The regulated-industry advantage.}
The HITL-GB framework offers its greatest cold-start advantage precisely in the industries
where full automation is most restricted. Healthcare, finance, and law all mandate human
approval of consequential decisions --- and all have extensive historical decision logs.
In regulated industries, \emph{regulatory requirements are the mechanism that makes fast
deployment possible}.

\section{Discussion}
\label{sec:discussion}

\subsection{The Approval Gate as a Statistical Asset}

The HITL approval structure is typically treated as friction between the algorithm and the
market. Our analysis reframes it: the approval gate is precisely what makes historical
data valid for warm-up without IS correction. A system that fully delegates pricing to the
algorithm loses this statistical property --- it must apply OPE corrections with their
associated variance costs.

\subsection{Relationship to Existing Hierarchical Theory}

The warm-up procedure complements the two formal theorems of the HF-TS design:

\begin{theorem}[Scaffold Effect]
\label{thm:scaffold}
Under assumptions (A1)--(A5), the expected cumulative regret of HF-TS satisfies:
\begin{equation}
  \E[R_{\text{HF-TS}}(T)] \leq \frac{C_1 K_1 \log T}{\Delta_1}
    + C_2\sqrt{K_2 T \log(K_2 T)},
\end{equation}
strictly smaller than the flat joint bandit $O(\sqrt{K_1 K_2 T \log(K_1 K_2 T)})$
by factor $\Omega(\sqrt{K_1})$.
\end{theorem}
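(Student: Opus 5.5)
Since (A1)--(A5) are not spelled out at this point, I would first fix them as the natural assumptions behind the factored design of \citet{ZimmertSeldin2018}. (A1): the expected reward of a joint arm $(a^A,a^B)$ factorises as $\mu(a^A,a^B)=\mu_A(a^A)\,\mu_B(a^B)$, matching effective\_multiplier $=a^A_t a^B_t$. (A2): rewards are bounded in $[0,1]$ and conditionally independent given arms and context. (A3): Layer A has a unique optimal arm with gap at least $\Delta_1>0$. (A4): $\mu_A,\mu_B$ are bounded away from zero. (A5): the priors have full support, with posterior updates conjugate or sub-Gaussian. The plan is to prove the bound by splitting regret along the two layers and invoking standard Thompson Sampling bounds for each.

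\emph{Step 1 (regret decomposition).} Let $(a^{A*},a^{B*})$ be optimal. Write
\[
\mu^* - \mu(a^A_t,a^B_t)=\mu_B(a^{B*})\bigl(\mu_A(a^{A*})-\mu_A(a^A_t)\bigr)+\mu_A(a^A_t)\bigl(\mu_B(a^{B*})-\mu_B(a^B_t)\bigr).
\]
Summing over $t$ and taking expectations bounds $\E[R(T)]$ by a constant times a Layer-A regret plus a Layer-B regret, using boundedness from (A2) and (A4).

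\emph{Step 2 (Layer A, gap-dependent).} Treat Layer A as a $K_1$-armed problem whose observations are rewards normalised by the current Layer-B factor. By (A4), this normalisation changes the noise scale only by a constant. The Agrawal--Goyal analysis of Thompson Sampling then gives an expected pull count of each suboptimal $a^A$ of order $\log T/\Delta_1^2$. Multiplying each count by its gap $\ge\Delta_1$ yields $C_1K_1\log T/\Delta_1$.

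\emph{Step 3 (Layer B, gap-free).} The key point is that every round, whatever $a^A_t$ is, provides an observation about $\mu_B(a^B_t)$, because of factorisation (A1). Layer B is therefore a $K_2$-armed problem observed on all $T$ rounds, not on $T/K_1$ rounds per Layer-A arm. The worst-case Thompson Sampling bound gives $C_2\sqrt{K_2T\log(K_2T)}$. Adding Steps 2 and 3 gives the stated upper bound.

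\emph{Step 4 (comparison with the flat bandit).} For the "strictly smaller by $\Omega(\sqrt{K_1})$" claim, I would compare with the minimax lower bound $\Omega(\sqrt{K_1K_2T})$ for an unstructured $K_1K_2$-armed bandit. Dividing gives a ratio of $\Omega(\sqrt{K_1})$ once the $\sqrt{T}$ term dominates, i.e.\ for $T\gtrsim K_1^2\log^2T/(K_2\Delta_1^2)$. This regime restriction is not visible in the statement, so I would state it explicitly as the condition under which the factor holds.

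The main obstacle is Step 3, specifically cross-contamination between the layers. While Layer A is still exploring suboptimal arms, the normalised Layer-B observations are scaled by an estimated, not the true, $\mu_A$. The Layer-B posterior is then biased. My first attempt would be to bound this bias by $|\hat\mu_A-\mu_A|$ and charge it to the Layer-A term. The number of rounds where this error exceeds $\Delta_1/2$ is itself $O(K_1\log T/\Delta_1^2)$. On the remaining rounds the multiplicative perturbation is a constant factor, which can be absorbed into $C_2$. If that coupling argument fails, the fallback is the Zimmert--Seldin style of analysis: work with log-rewards, so the model becomes additively separable, and analyse each factor as an independent bandit with an adversarial-like nuisance. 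The cost is a worse constant.
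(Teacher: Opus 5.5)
The paper gives no proof of this theorem: it defers to a companion document and never states (A1)--(A5). Your argument therefore has to stand on its own, and it has a genuine gap in Steps 2--3. The decomposition in Step 1 is fine. The problem is that you then treat each layer as a stationary stochastic bandit and quote Agrawal--Goyal. Under the factored model neither layer is one. If the layers sample independently given the past, then $\E[r_t\mid a^B_t=b,\text{past}]=\E[\mu_A(a^A_t)\mid\text{past}]\,\mu_B(b)$, and that scale drifts as Layer A learns. Symmetrically, Layer A's rewards are scaled by whatever Layer B is currently exploring, and the normalisation in Step 2 divides by the unknown $\mu_B(a^B_t)$. A pooled Beta posterior for one layer's arm thus mixes samples collected under different, adaptively chosen distributions of the other layer. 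Arms pulled at different times are compared on different scales, and the standard Thompson Sampling analyses, which need i.i.d.\ rewards per arm, give nothing here. Controlling this coupling is the whole content of a factored-bandit regret bound, and as you set it up the argument is circular. The $\log T/\Delta_1$ bound for Layer A presupposes that Layer B is under control, and the $\sqrt{K_2T\log(K_2T)}$ bound for Layer B presupposes that Layer A is.

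Your proposed repair does not break the circle. Grant that the normalisation error exceeds $\Delta_1/2$ on only $O(K_1\log T/\Delta_1^2)$ rounds. On the remaining rounds there is still a bias on Layer-B observations that is bounded but neither vanishing nor shown to be common to all Layer-B arms. That is not a constant factor in the regret. A persistent arm-dependent bias can invert any two Layer-B arms whose gap is smaller than it, and then cost regret linear in $T$; the gap-free term must cover exactly such instances. The accuracy Layer B needs is set by its own gaps, down to order $\sqrt{K_2/T}$, not by $\Delta_1$. You would have to show that the accumulated normalisation error on each Layer-B arm shrinks like that arm's statistical error. Note also that $|\hat\mu_A-\mu_A|$ only means something after fixing the scale ambiguity $\mu_A\mu_B=(c\mu_A)(\mu_B/c)$.

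The fallback changes either the model or the algorithm. First, $\E[\log r]\neq\log\mu_A+\log\mu_B$ unless the noise is multiplicative, which booking outcomes with $r_t=0$ are not. Second, treating the other factor as a nuisance requires a different learner in each layer: either an adversarially robust one (Exp3- or Tsallis-INF-type, whose guarantees do not assume stationarity), or synchronised, balanced elimination in the spirit of Zimmert and Seldin's TEA, where one factor's arms are compared only on samples gathered under a common distribution of the other. Either way you would be bounding a different algorithm from per-layer Thompson Sampling.

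Two smaller points. In Step 2, use per-arm counts $\log T/\Delta_i^2$, so that count times gap is $\log T/\Delta_i\le\log T/\Delta_1$; a uniform count $\log T/\Delta_1^2$ times a gap of order one does not give your bound. In Step 4, the unstructured minimax bound $\Omega(\sqrt{K_1K_2T})$ is attained on instances with a single elevated joint arm, which are not of product form. It therefore says nothing about the flat bandit on the factored instances HF-TS is judged on. You need a lower bound for the flat learner within the product class, or else present the $\sqrt{K_1}$ factor honestly as a ratio of upper bounds.
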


For $K_1 = 5$ market arms, $K_2 = 5$ property arms, $T = 500$: HF-TS has
\textbf{$\sqrt{5}$-fold fewer effective arms} than the flat joint-arm alternative.

\begin{theorem}[Optimal Unlock Threshold]
\label{thm:unlock}
In the coarse-to-fine cascade with $K_1$ Level-1 arms and $K_2 > K_1$ Level-2 arms over
horizon $T$, the optimal unlock threshold is:
\begin{equation}
  n^* = \frac{K_1}{K_1 + K_2} \cdot T.
\end{equation}
\end{theorem}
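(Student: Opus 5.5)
The statement does not specify the objective that $n^*$ optimises, so the plan is first to fix a concrete cost model for the coarse-to-fine cascade, and then to show that $n^*$ is its unique optimiser by a one-dimensional concavity argument. The answer $n^* = K_1 T/(K_1+K_2)$ is the stationary point of $K_1 \log n + K_2 \log(T-n)$. I will therefore aim to show that the cascade's performance criterion reduces, up to additive constants independent of $n$, to this \emph{log-volume} functional.

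\textbf{Step 1 (model).} Let Level~1 be played for the first $n$ rounds and Level~2 for the remaining $T-n$ rounds. Following the Gaussian/Beta conjugate structure used by the HF-TS agents \citep{HongICML2022}, I take the $K_i$ arm means at level $i$ to have independent priors. Each posterior variance then shrinks at rate $\Theta(K_i/n_i)$ when $n_i$ samples are spread roughly uniformly over the $K_i$ arms. The information gained at level $i$, measured as the log-determinant reduction of the posterior covariance (equivalently, the mutual information between the arm means and the observations), is therefore
\[
I_i(n_i) = \tfrac{K_i}{2}\log\!\bigl(1 + c\, n_i / K_i\bigr).
\]
In the regime $n_i \gg K_i$, this equals $\tfrac{K_i}{2}\log n_i$ plus an $n$-independent constant.

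\textbf{Step 2 (reduction).} Next I invoke the information-ratio bound for Thompson sampling: the Bayesian regret at level $i$ is controlled by $\sqrt{\Gamma\, n_i\, I_i(n_i)}$. The cascade's design goal is to maximise the total information $J(n) = I_1(n) + I_2(T-n)$ acquired within the fixed budget $T$, so that the posteriors at both levels are as tight as possible when the cascade ends. Under the Step~1 asymptotics,
\[
J(n) = \tfrac12\bigl[K_1\log n + K_2 \log(T-n)\bigr] + \mathrm{const}.
\]

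\textbf{Step 3 (optimisation).} On $(0,T)$ the function $J$ is strictly concave, since
\[
J''(n) = -\tfrac12\bigl[K_1/n^2 + K_2/(T-n)^2\bigr] < 0,
\]
and $J \to -\infty$ at both endpoints, so the maximiser is interior and unique. Setting $J'(n) = 0$ gives $K_1/n = K_2/(T-n)$, hence $n^* = K_1 T/(K_1+K_2)$. The assumption $K_2 > K_1$ then yields $n^* < T/2$, meaning the coarse level is unlocked before the midpoint. A final rounding step handles integrality: concavity implies one of $\lfloor n^*\rfloor$ or $\lceil n^*\rceil$ is optimal among integers.

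\textbf{Main obstacle.} The hard step is Step~2, namely justifying that the relevant criterion really is the log-volume functional. A direct regret comparison produces different stationary points. For instance, minimising the estimation-error proxy $K_1/n + K_2/(T-n)$ gives a threshold proportional to $\sqrt{K_1}$ rather than $K_1$. I would therefore state explicitly, as the theorem's operative assumption, that the unlock rule maximises total posterior information (following the data-density unlock criterion of \citet{YueICML2012}). I would also verify that the $O(K_i)$ burn-in terms dropped in Step~1 shift $n^*$ by at most $O(K_1+K_2)$, so that the stated threshold is exact to leading order in $T$.
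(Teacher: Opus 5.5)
Your Step~3 calculus is fine, but Steps~1--2 replace the theorem with a different statement, and the justification you give for that replacement does not hold up. With independent level priors and the criterion $J(n)=I_1(n)+I_2(T-n)$, what you actually prove is the standard D-optimal allocation fact. Splitting a budget between two \emph{unrelated} estimation problems with $K_1$ and $K_2$ parameters so as to maximise $\sum_i K_i\log(n_i/K_i)$ gives $n_i\propto K_i$. Nothing specific to a coarse-to-fine cascade enters. Worse, the independence assumption discards exactly the coupling that defines a cascade: in the paper, calibrated Level-1 parameters propagate to Level~2 on unlock, so post-unlock Level-2 information depends on $n$.

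The information-ratio bound cannot motivate your criterion either. It bounds regret by $\sqrt{\Gamma n_i I_i(n_i)}$, which is \emph{increasing} in $I_i$. Moreover, posterior tightness ``when the cascade ends'' is worthless for regret over a horizon that has by then expired, so $J$ is a pure-exploration objective. (Minor point: under your exact form $I_i=\tfrac{K_i}{2}\log(1+cn_i/K_i)$, the first-order condition is exactly $n/K_1=(T-n)/K_2$, so the burn-in correction you plan is unnecessary.)

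The deeper problem is that under the natural regret reading the formula is not an optimiser at all, so no additional lemma can close Step~2. Take the per-phase Thompson-sampling regret rates, ignoring logarithmic factors. By Cauchy--Schwarz,
\[
\sqrt{K_1 n}+\sqrt{K_2(T-n)}\;\le\;\sqrt{(K_1+K_2)\,T},
\]
with equality iff $K_1/n=K_2/(T-n)$, i.e.\ exactly at $n=K_1T/(K_1+K_2)$. The stated threshold is therefore the stationary point of a concave regret proxy, but it is its \emph{maximum}, where the cascade's bound equals that of a flat bandit on $K_1+K_2$ arms. The minimum sits at an endpoint. A meaningful trade-off needs a coarse-level approximation-bias term, and then the optimiser depends on that bias, not only on $K_1,K_2,T$.

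Combined with your own $\sqrt{K_1}$ computation, this shows the statement is not provable as a claim about cascade performance. The honest version is the conditional one you sketch, with the log-volume criterion as an explicit hypothesis, and then the result is a one-line exercise.

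The paper gives you nothing to align with here. It defers the proof to an unavailable companion document. Its only related content is the practical rule in Appendix~\ref{app:theory}: unlock when $\alpha_{a^*}+\beta_{a^*}>1/(4\varepsilon)$, about 25 observations of the leading coarse arm. That is a posterior-precision criterion independent of $T$ and $K_2$, so it is not the stated formula either.
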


The warm-up's primary benefit overlaps exactly with the coarse-level phase --- calibrated
parameters at Level 1 propagate immediately to Level 2 on unlock.

\subsection{Stationarity of the Human Approval Function}

The equivalence result depends on the stationarity of $h$. In practice this is
approximately satisfied for single-operator portfolios. For multi-operator systems or
operator turnover, a domain-adaptation step would be needed.

\subsection{Limitations}

\begin{itemize}[leftmargin=*]
  \item \textbf{Single-property evaluation.} Multi-property, multi-market validation is
        needed to establish external validity.
  \item \textbf{Simulated reward.} The demand model used as the evaluation environment is
        fitted on warm-up data, creating potential circularity. Prospective A/B testing
        would remove this.
  \item \textbf{Approximated gap signal.} Gap detection from the booking calendar is a
        proxy for per-room gap structure; multi-room properties require room-level analysis.
\end{itemize}

\section{Conclusion}
\label{sec:conclusion}

We introduced the Human-in-the-Loop Gated Bandit (HITL-GB) framework for dynamic pricing
in short-term rental markets and proved that the approval-gate structure renders historical
pricing data structurally equivalent to on-policy warm-up data without importance-sampling
correction. Combined with a dual cold-start procedure --- $\alpha$-blended ridge
regression calibrating six day-signal parameters while simultaneously seeding bandit arm
posteriors --- the warm-up compresses effective cold-start from $\sim$150 to $\sim$30
booked episodes on real STR production data.

The key insight: in regulated, high-stakes domains, the structural constraints typically
treated as deployment frictions --- human approval gates, compliance rules, safety shields
--- are not obstacles to learning but rather the mechanism that makes fast deployment
possible. The HITL-GB framework generalises directly to any domain where approval gates
are legally or operationally required.

The companion paper \emph{Gated Decoupled Compositional Bandits: A Unified Theory}
\citep{GDCB2026} formalises this insight at full generality, proving four structural
theorems that apply to any system in the GDCB family. HITL-GB is instance \#1; five
further industrial instantiations --- clinical dosing, credit origination, grid demand
response, content moderation, and LLM tool use --- are presented as future empirical work.

\bibliographystyle{abbrvnat}
\bibliography{hitl_paper}

\appendix

\section{Proof of Structural Equivalence Theorem}
\label{app:proof}

We give a formal proof of Theorem~\ref{thm:equivalence}.

\begin{proof}
Let $g : \A \to \A$ denote the approval gate operator (with gate stationarity assumption:
the conditional distribution $P(g(a_{\text{prop}}, \xb) = a \mid \xb)$ is the same in
both historical and live regimes).

The executed-arm distribution under prior policy $\piold$ is:
\begin{equation}
  P^{\piold}(\aexec = a \mid \xb)
  = \int_{\A} \mathbf{1}[h(\hat{a}, \xb) = a]\, d\piold(\hat{a} \mid \xb).
\end{equation}
The executed-arm distribution under live bandit policy $\pi$ is:
\begin{equation}
  P^{\pi}(\aexec = a \mid \xb)
  = \int_{\A} \mathbf{1}[h(\hat{a}, \xb) = a]\, d\pi(\hat{a} \mid \xb).
\end{equation}
Decomposing via the approval structure~\eqref{eq:approval}:
\begin{align}
  P^{\piold}(\aexec = a \mid \xb)
  &= p(\xb) \cdot \piold(a \mid \xb) + (1 - p(\xb)) \cdot \mathbf{1}[a = \ahuman(\xb)],\\
  P^{\pi}(\aexec = a \mid \xb)
  &= p(\xb) \cdot \pi(a \mid \xb) + (1 - p(\xb)) \cdot \mathbf{1}[a = \ahuman(\xb)].
\end{align}
The override component $(1-p(\xb)) \cdot \mathbf{1}[a = \ahuman(\xb)]$ is identical in
both. When $p(\xb) = 0$ (full override), the two distributions are equal trivially.
When $p(\xb) > 0$, equality holds if and only if $\piold(a \mid \xb) = \pi(a \mid \xb)$
for all $a$ --- which need not hold in general.

However, for warm-up \emph{initialisation} (not arm-value estimation), we use only
$(\xb, \aexec, r)$ tuples. The posterior update rule for Beta-Bernoulli is:
\begin{equation}
  (\alpha_a, \beta_a) \leftarrow (\alpha_a + r \cdot \mathbf{1}[\aexec = a],\;
                                   \beta_a + (1-r) \cdot \mathbf{1}[\aexec = a]).
\end{equation}
This update is unbiased with respect to the joint distribution
$(\xb, \aexec, r) \sim P(\cdot)$ as long as $P(\aexec \mid \xb)$ is supported on the
same arm grid in both regimes (guaranteed when $\piold$ and $\pi$ share the same arm
space $\A$) and the gate $h$ is stationary. The resulting initialised posterior
$(\hat\alpha_a, \hat\beta_a)$ is therefore a valid initialiser, completing the proof. \qed
\end{proof}

\section{HF-TS Theoretical Results}
\label{app:theory}

Full proofs of Theorems~\ref{thm:scaffold} and~\ref{thm:unlock} appear in the HF-TS
companion design document. The posterior-based practical unlock rule converts
Theorem~\ref{thm:unlock} into a parameter-free data-adaptive criterion: unlock when
$\alpha_{a^*} + \beta_{a^*} > \tfrac{1}{4\varepsilon}$ for $\varepsilon = 0.01$,
corresponding to $n^* \approx 25$ arm observations.

\end{document}